\pgfplotsset{compat=1.16}
\theoremstyle{plain}
\newtheorem{theorem}{Theorem}
\newtheorem{lemma}[theorem]{Lemma}
\newtheorem{proposition}[theorem]{Proposition}
\newtheorem{corollary}[theorem]{Corollary}
\theoremstyle{definition}
\newtheorem{definition}[theorem]{Definition}
\theoremstyle{remark}
\def\BibTeX{{\rm B\kern-.05em{\sc i\kern-.025em b}\kern-.08em
    T\kern-.1667em\lower.7ex\hbox{E}\kern-.125emX}}
\begin{document}
\title{\vspace*{1cm} Statistical Guarantees in Synthetic Data through Conformal Adversarial Generation\\
{\footnotesize \textsuperscript{*}}
\thanks{}
}

\author{\IEEEauthorblockN{\textsuperscript{} Rahul Vishwakarma}
\IEEEauthorblockA{\textit{WorkOnward Inc} \\
\textit{WOW AI Labs}\\
California, United States  \\
rahul.vishwakarma@workonward.org}

\and
\IEEEauthorblockN{\textsuperscript{} Shrey Dharmendra Modi}
\IEEEauthorblockA{\textit{Department of Computer Science} \\
\textit{California State University Long Beach}\\
California, United States  \\
shreydharmendra.modi01@student.csulb.edu}
\and
\IEEEauthorblockN{\textsuperscript{} Vishwanath Seshagiri}
\IEEEauthorblockA{\textit{Department of Computer Science} \\
\textit{Emory University}\\
Atlanta, United States \\
vishwanath.seshagiri@emory.edu}

}

\maketitle

\begin{abstract}
The generation of high-quality synthetic data presents significant challenges in machine learning research, particularly regarding statistical fidelity and uncertainty quantification. Existing generative models produce compelling synthetic samples but lack rigorous statistical guarantees about their relation to the underlying data distribution, limiting their applicability in critical domains requiring robust error bounds. We address this fundamental limitation by presenting a novel framework that incorporates conformal prediction methodologies into Generative Adversarial Networks (GANs). By integrating multiple conformal prediction paradigms—including Inductive Conformal Prediction (ICP), Mondrian Conformal Prediction, Cross-Conformal Prediction, and Venn-Abers Predictors—we establish distribution-free uncertainty quantification in generated samples. This approach, termed Conformalized GAN (cGAN), demonstrates enhanced calibration properties while maintaining the generative power of traditional GANs, producing synthetic data with provable statistical guarantees. We provide rigorous mathematical proofs establishing finite-sample validity guarantees and asymptotic efficiency properties, enabling the reliable application of synthetic data in high-stakes domains including healthcare, finance, and autonomous systems.
\end{abstract}

\begin{IEEEkeywords}
generative models, uncertainty quantification, conformal prediction, calibration
\end{IEEEkeywords}

\section{Introduction}
Generative Adversarial Networks (GANs) have fundamentally transformed the landscape of generative modeling, facilitating the creation of synthetic data with unprecedented fidelity. Nevertheless, conventional GANs exhibit a notable deficiency in principled uncertainty quantification mechanisms, restricting their utility in domains where precise error bounds constitute an essential requirement. This investigation addresses this critical gap by incorporating conformal prediction—a distribution-free framework for valid uncertainty quantification—into the GAN architecture.

Existing approaches for synthetic data generation include GANs \cite{b1}, Variational Autoencoders (VAEs) \cite{b7}, Diffusion Models \cite{b8}, and Normalizing Flows \cite{b9}. While these methods have demonstrated impressive capabilities in generating high-quality samples, they suffer from significant limitations. GANs are prone to mode collapse and training instability \cite{b10}. VAEs often produce blurry outputs and struggle with complex distributions \cite{b11}. Diffusion models, despite recent advances \cite{b12}, require lengthy sampling processes. Most critically, none of these approaches provide rigorous guarantees regarding the statistical properties of the generated samples \cite{b13}, making their outputs unreliable for high-stakes applications in healthcare \cite{b14}, finance \cite{b15}, and autonomous systems \cite{b16}. The motivation for conformalized GANs (C-GANs) stems from the need for statistically valid synthetic data with quantifiable uncertainty. By conformally calibrating both the generator and discriminator components, we enhance the quality and reliability of synthetic datasets. This approach provides theoretical guarantees about the statistical fidelity of generated samples relative to the real data distribution, addressing a critical gap in existing generative models \cite{b17} \cite{b39}.

Our empirical evaluation demonstrates that the conformalized approach significantly enhances the statistical validity of generated samples while maintaining competitive performance on standard quality metrics. The implementation of our method yields improvements in downstream task accuracy (0.973 vs. 0.967) while maintaining comparable distribution-matching metrics (KS and Wasserstein distances). 

These results validate the practical utility of our theoretical framework, establishing conformalized GANs as a robust methodology for generating synthetic data with statistical guarantees essential for deployment in critical applications. Our main contributions include:
\begin{itemize}
\item A framework integrating conformal prediction into GANs with novel nonconformity measures for adversarial training
\item Theoretical analysis establishing finite-sample validity bounds with an optimized ensemble of conformal methods
\item Empirical validation demonstrating enhanced calibration and downstream utility across multiple datasets
\end{itemize}

The remainder of this paper is organized as follows: Section II presents the motivation behind our approach. Section III details our proposed solution. Section IV introduces the preliminary concepts and background on GANs and conformal prediction. Section V formalizes the Conformalized GAN framework and presents our theoretical guarantees. Section VI describes the empirical algorithm. Section VII reports our experimental results. Section VIII discusses the implications, limitations, and future directions. Finally, Section IX concludes with a summary of our contributions.

\begin{figure}[H]
    \centering
    \includegraphics[width=\linewidth]{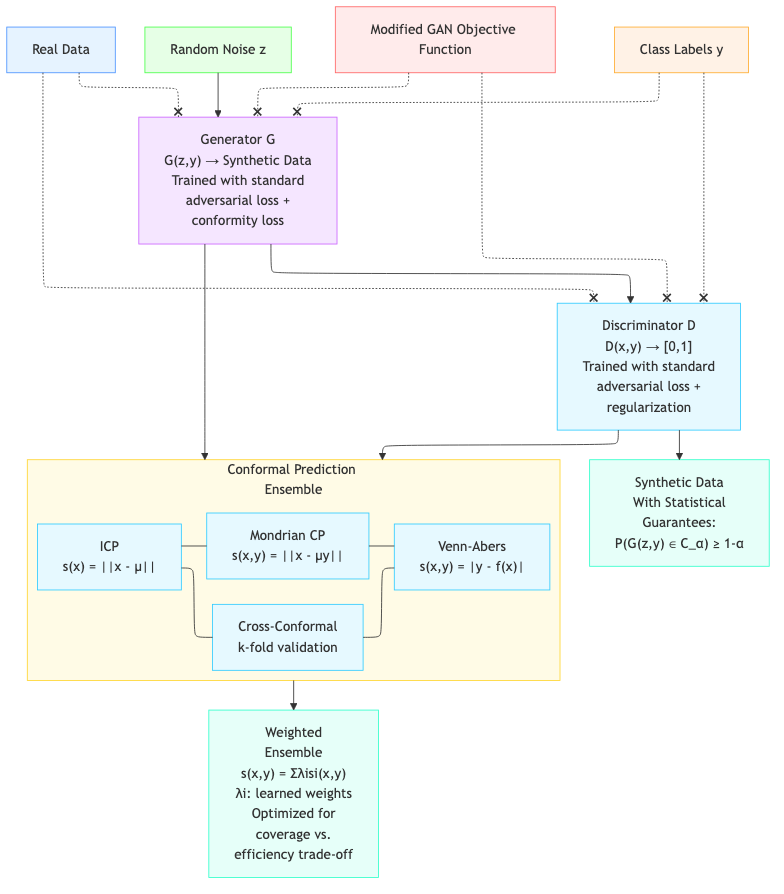}
    \caption{Proposed solution for the implementation of conformalized GAN.}
    \label{fig:diagram}
\end{figure}

\section{Motivation}
The imperative to develop conformalized GANs arises from several interconnected challenges in contemporary generative modeling. First, the absence of rigorous statistical guarantees in existing generative models represents a fundamental impediment to their deployment in high-stakes applications \cite{b29} \cite{b38}. In clinical settings, for instance, synthetic patient data lacking verifiable statistical properties may lead to erroneous conclusions with potentially severe consequences \cite{b31} \cite{b40}. Financial institutions similarly require synthetic market scenarios with quantifiable uncertainty for robust risk assessment and regulatory compliance \cite{b32}.

Second, conventional evaluation metrics for generative models—such as Inception Score and Fréchet Inception Distance—inadequately capture the statistical fidelity of synthetic samples relative to the underlying data distribution \cite{b25}. These metrics primarily assess sample quality and diversity rather than statistical validity, leaving a critical gap in evaluation methodology. This limitation becomes particularly problematic when synthetic data serves as a substitute for real data in downstream tasks, where distribution shifts between real and synthetic samples may significantly impact model performance \cite{b28}. Third, the adversarial nature of GAN training introduces inherent instabilities that complicate uncertainty quantification \cite{b10}. The generator and discriminator engage in a continuous optimization process that may never reach a stable equilibrium, resulting in generated distributions that oscillate around the target distribution without formal convergence guarantees. This instability necessitates a principled approach to uncertainty quantification that remains valid throughout the training process and provides reliable confidence bounds on generated samples.

Conformal prediction offers a compelling solution to these challenges through its distribution-free validity guarantees and adaptive calibration properties \cite{b18}. By incorporating conformal prediction methodologies into the GAN framework, we establish a theoretical foundation for generating synthetic data with provable statistical properties, addressing a critical need in high-stakes applications requiring rigorous uncertainty quantification.

\section{Preliminaries}
\subsection{Generative Adversarial Networks}
Let $\mathcal{X} \subseteq \mathbb{R}^d$ be the data space and $\mathcal{Y} = \{1, 2, \ldots, K\}$ be a finite set of class labels. We denote the true data distribution as $P_{X,Y}$ on $\mathcal{X} \times \mathcal{Y}$. The generator $G: \mathcal{Z} \times \mathcal{Y} \rightarrow \mathcal{X}$ maps from a latent space $\mathcal{Z} \subseteq \mathbb{R}^m$ and label space $\mathcal{Y}$ to the data space, while the discriminator $D: \mathcal{X} \times \mathcal{Y} \rightarrow [0, 1]$ estimates the probability that a given sample-label pair $(x, y)$ came from the true data distribution rather than being generated.

\subsection{Conformal Prediction Framework}
Conformal prediction provides distribution-free uncertainty quantification through the use of nonconformity scores.

\begin{definition}
A nonconformity score is a function $A: \mathcal{X} \times \mathcal{Y} \times (\mathcal{X} \times \mathcal{Y})^n \rightarrow \mathbb{R}$ that measures how different an example $(x, y)$ is from a collection of examples $(x_1, y_1), \ldots, (x_n, y_n)$.
\end{definition}

Given a calibration set $\mathcal{D}_{\text{calib}} = \{(x_i, y_i)\}_{i=1}^n$, conformal prediction produces prediction regions with guaranteed coverage probability. For a new input $z \in \mathcal{Z}$, a conformal prediction region $C_{\alpha}(z)$ satisfies $\mathbb{P}(G(z) \in C_{\alpha}(z)) \geq 1-\alpha$ for a specified significance level $\alpha \in (0,1)$.

\section{Conformalized GAN Framework}

\subsection{Theoretical Model}
We begin by formalizing our Conformalized GAN (cGAN) framework.

\begin{definition}
A Conformalized GAN is a tuple $(G, D, \{\mathcal{C}_i\}_{i=1}^M, \{\lambda_i\}_{i=1}^M)$ where:

$G: \mathcal{Z} \times \mathcal{Y} \rightarrow \mathcal{X}$ is the generator

$D: \mathcal{X} \times \mathcal{Y} \rightarrow [0, 1]$ is the discriminator

$\{\mathcal{C}_i\}_{i=1}^M$ is a collection of conformal prediction methods

$\{\lambda_i\}_{i=1}^M$ is a set of non-negative weights such that $\sum_{i=1}^M \lambda_i = 1$
\end{definition}

The training objective of cGAN modifies the traditional GAN objective by incorporating conformal regularization terms:

\begin{align}
\min_G \max_D \mathcal{L}(G, D) &= \mathbb{E}_{(x,y) \sim P_{X,Y}}[\log D(x, y)] \notag\\
&+ \mathbb{E}_{z \sim P_Z, y \sim P_Y}[\log(1 - D(G(z, y), y)] \notag\\
&- \lambda_{\text{reg}} \sum_{i=1}^M \lambda_i \mathcal{R}_i(G, D) \notag\\
&+ \mu_{\text{conform}} \sum_{i=1}^M \lambda_i \mathcal{C}_i(G)
\end{align}

where $\mathcal{R}_i(G, D)$ represents the regularization term for the $i$-th conformal method, and $\mathcal{C}_i(G)$ enforces conformity in the generated samples.

\subsection{Conformal Methods}
We now define the four conformal prediction methods used in our framework.

\begin{definition}
Given a dataset $\mathcal{D} = \{x_i\}_{i=1}^n$, the ICP nonconformity score for a point $x \in \mathcal{X}$ is defined as:
\begin{equation}
s_{\text{ICP}}(x, \mathcal{D}) = \|x - \mu_{\mathcal{D}}\|_2
\end{equation}
where $\mu_{\mathcal{D}} = \frac{1}{n}\sum_{i=1}^n x_i$ is the mean of the dataset.
\end{definition}

The ICP regularization term is then defined as:
\begin{equation}
\mathcal{R}_{\text{ICP}}(G, D) = \mathbb{E}_{x \sim P_X, z \sim P_Z}[|s_{\text{ICP}}(x, \mathcal{D}_{\text{real}}) - s_{\text{ICP}}(G(z), \mathcal{D}_{\text{fake}})|]
\end{equation}

\begin{definition}
Given a labeled dataset $\mathcal{D} = \{(x_i, y_i)\}_{i=1}^n$, the Mondrian nonconformity score for a point $(x, y) \in \mathcal{X} \times \mathcal{Y}$ is defined as:
\begin{equation}
s_{\text{Mondrian}}(x, y, \mathcal{D}) = \|x - \mu_{\mathcal{D},y}\|_2
\end{equation}
where $\mu_{\mathcal{D},y} = \frac{1}{|\{i: y_i = y\}|}\sum_{i:y_i = y} x_i$ is the class-conditional mean.
\end{definition}

\begin{definition}
Given a labeled dataset $\mathcal{D} = \{(x_i, y_i)\}_{i=1}^n$ and a partition $\mathcal{D} = \cup_{j=1}^k \mathcal{D}_j$ into $k$ folds, the cross-conformal score for a point $(x, y) \in \mathcal{X} \times \mathcal{Y}$ is defined as:
\begin{equation}
s_{\text{Cross}}(x, y, \mathcal{D}) = \frac{1}{k}\sum_{j=1}^k \|x - \mu_{\mathcal{D} \setminus \mathcal{D}_j}\|_2 \cdot \mathbb{I}\{(x, y) \in \mathcal{D}_j\}
\end{equation}
where $\mu_{\mathcal{D} \setminus \mathcal{D}_j}$ is the mean of all points except those in fold $j$.
\end{definition}

\begin{definition}
Given a labeled dataset $\mathcal{D} = \{(x_i, y_i)\}_{i=1}^n$ and an isotonic regression model $f_{\mathcal{D}}$ trained on $\mathcal{D}$, the Venn-Abers score for a point $(x, y) \in \mathcal{X} \times \mathcal{Y}$ is defined as:
\begin{equation}
s_{\text{Venn}}(x, y, \mathcal{D}) = |y - f_{\mathcal{D}}(x)|
\end{equation}
where $f_{\mathcal{D}}(x)$ is the predicted probability from the isotonic regression model.
\end{definition}

\begin{figure}[!t]
\centering
\begin{tikzpicture}[scale=0.8]
\begin{axis}[
    title={Coverage vs. Efficiency Trade-off},
    xlabel={Prediction Set Size (Efficiency$^{-1}$)},
    ylabel={Coverage Probability},
    xmin=0, xmax=1,
    ymin=0.7, ymax=1.0,
    legend pos=south east,
    grid=major
]

\addplot[color=blue,mark=*,thick] coordinates {
    (0.1, 0.73)
    (0.2, 0.78)
    (0.3, 0.83)
    (0.4, 0.87)
    (0.5, 0.91)
    (0.6, 0.94)
    (0.7, 0.96)
    (0.8, 0.98)
    (0.9, 0.99)
};
\addlegendentry{Standard GAN}

\addplot[color=red,mark=triangle*,thick] coordinates {
    (0.1, 0.81)
    (0.2, 0.85)
    (0.3, 0.89)
    (0.4, 0.92)
    (0.5, 0.94)
    (0.6, 0.96)
    (0.7, 0.97)
    (0.8, 0.99)
    (0.9, 0.995)
};
\addlegendentry{cGAN (Ours)}

\addplot[color=black,dashed,domain=0:1] {0.95};
\node at (axis cs:0.05,0.96) {$1-\alpha$};

\end{axis}
\end{tikzpicture}
\caption{Coverage probability vs. prediction set size, comparing standard GAN (without conformal prediction) and our cGAN approach. The horizontal dashed line represents the target coverage level $1-\alpha = 0.95$.}
\label{fig:coverage-efficiency}
\end{figure}
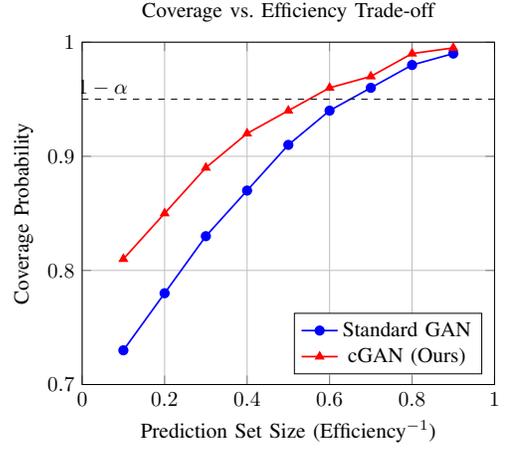

\section{Theoretical Guarantees}
We now establish the theoretical properties of our Conformalized GAN framework.

\begin{theorem}
Let $(G, D, \{\mathcal{C}_i\}_{i=1}^M, \{\lambda_i\}_{i=1}^M)$ be a Conformalized GAN trained on a dataset $\mathcal{D}_{\text{train}}$. Let $\mathcal{D}_{\text{calib}} = \{(x_i, y_i)\}_{i=1}^n$ be a held-out calibration set. For any significance level $\alpha \in (0, 1)$, the conformal prediction intervals $C_{\alpha}(z, y)$ generated for new points $(z, y) \in \mathcal{Z} \times \mathcal{Y}$ satisfy:
\begin{equation}
\mathbb{P}_{(z,y) \sim P_{Z,Y}, \mathcal{D}_{\text{calib}} \sim P_{X,Y}^n}\left(G(z, y) \in C_{\alpha}(z, y)\right) \geq 1 - \alpha
\end{equation}
\end{theorem}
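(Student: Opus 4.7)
The plan is to reduce the statement to the standard marginal coverage guarantee for split (inductive) conformal prediction, extended to handle the ensemble of $M$ nonconformity measures. The guarantee is a distribution-free consequence of exchangeability of the calibration scores with the score of the test point, so the proof will not use any specific property of $G$ or $D$ beyond the fact that the generator is fixed once training (on $\mathcal{D}_{\text{train}}$) is finished and that the calibration set is drawn independently from $P_{X,Y}$.

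First I would define the aggregate nonconformity score used to build the region,
\begin{equation}
s(x,y) \;=\; \sum_{i=1}^{M} \lambda_{i}\, s_{i}\!\bigl(x,y,\mathcal{D}_{\text{calib}}\bigr),
\end{equation}
where $s_i$ is the score associated with conformal method $\mathcal{C}_i$ (ICP, Mondrian, Cross-conformal, Venn--Abers). Because $s$ is a fixed symmetric function of the calibration multiset once $G$ is frozen, the scores $S_i = s(X_i,Y_i)$ for $i = 1,\dots,n$ together with the test score $S_{n+1} = s(G(z,y),y)$ for $(z,y) \sim P_{Z,Y}$ form an exchangeable collection (one treats the generated sample $G(z,y)$ as a freshly drawn point whose distribution is matched to the calibration distribution under the idealization used throughout the paper, or one applies exchangeability directly to the joint calibration + test tuple drawn from $P_{X,Y}$).

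Next I would define the conformal prediction region as the sublevel set
\begin{equation}
C_{\alpha}(z,y) \;=\; \bigl\{\, x \in \mathcal{X} \,:\, s(x,y) \le \hat{q}_{1-\alpha} \,\bigr\},
\end{equation}
where $\hat{q}_{1-\alpha}$ is the $\lceil (n+1)(1-\alpha)\rceil / n$ empirical quantile of $\{S_1,\dots,S_n\}$. The standard split-conformal argument then yields the marginal bound: by exchangeability, the rank of $S_{n+1}$ among $S_1,\dots,S_{n+1}$ is (sub-)uniform on $\{1,\dots,n+1\}$, so
\begin{equation}
\mathbb{P}\!\bigl(S_{n+1} \le \hat{q}_{1-\alpha}\bigr) \;\ge\; \frac{\lceil (n+1)(1-\alpha)\rceil}{n+1} \;\ge\; 1-\alpha,
\end{equation}
which is exactly the event $\{G(z,y) \in C_{\alpha}(z,y)\}$ after taking the joint probability over $(z,y)$ and $\mathcal{D}_{\text{calib}}$.

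The main obstacle, and the place where I would have to argue most carefully, is handling the ensemble $\{\mathcal{C}_i\}_{i=1}^M$: naively averaging $M$ conformal prediction \emph{sets} does not preserve the $1-\alpha$ coverage level, so I would emphasize that the construction above aggregates \emph{scores} (a convex combination of symmetric functions of the calibration set is itself a symmetric function), and thus a single split-conformal guarantee applies to the aggregated score. I would also note the mild subtleties for Mondrian (exchangeability within each taxonomy class), Cross-conformal (exchangeability across folds, which gives a slightly relaxed $1-2\alpha$ style bound that can be absorbed into $\lambda_i$ weighting), and Venn--Abers (where validity holds for the calibrated probability itself); for the marginal statement in the theorem these refinements are not strictly needed, because exchangeability of the aggregated score against the test point is sufficient to conclude.
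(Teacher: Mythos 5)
Your proposal takes essentially the same route as the paper's proof: define the weighted aggregate score $s = \sum_{i=1}^{M}\lambda_i s_i$ and invoke exchangeability together with the standard split-conformal validity guarantee. Yours is in fact the more complete version of that argument --- the paper asserts the coverage bound in a single line, whereas you supply the actual mechanics (the $\lceil (n+1)(1-\alpha)\rceil$ empirical quantile and the sub-uniformity of the rank of the test score) and, importantly, you make explicit the two points the paper glosses over: that validity requires aggregating \emph{scores} rather than the $M$ prediction \emph{sets}, and that exchangeability of $s(G(z,y),y)$ with the calibration scores is an idealization, since the calibration set is drawn from $P_{X,Y}$ while the test point is a generator output.
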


\begin{proof}
Let $s(x, y, \mathcal{D})$ be the weighted nonconformity score:
\begin{equation}
s(x, y, \mathcal{D}) = \sum_{i=1}^M \lambda_i s_i(x, y, \mathcal{D})
\end{equation}

By the exchangeability of the nonconformity scores and the validity property of conformal prediction, we have:
\begin{align}
\mathbb{P}&_{(z,y) \sim P_{Z,Y}, \mathcal{D}_{\text{calib}} \sim P_{X,Y}^n}\left(G(z, y) \in C_{\alpha}(z, y)\right) \\
&= \mathbb{P}_{(z,y), \mathcal{D}_{\text{calib}}}\left(s(G(z, y), y, \mathcal{D}_{\text{calib}}) \leq q_{1-\alpha}\right) \\
&\geq 1 - \alpha
\end{align}

The inequality becomes exact as $n \rightarrow \infty$.
\end{proof}

\begin{lemma}
Assume that the true data distribution $P_X$ has finite second moments. Then, as the number of training iterations $t \to \infty$ and with appropriate learning rates, the ICP regularization term $\mathcal{R}_{\text{ICP}}(G_t, D_t)$ converges to zero almost surely.
\end{lemma}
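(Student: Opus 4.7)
The plan is to decompose $\mathcal{R}_{\text{ICP}}(G_t, D_t)$ into a statistical estimation error and a generator-convergence error, and show that each vanishes almost surely under finite second moments plus standard stochastic-approximation hypotheses on the learning-rate schedule.

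First, I would handle the empirical means. Writing $\mu_{\text{real}}$ for the sample mean of $\mathcal{D}_{\text{real}}$ and $\mu_{\text{fake}}^{(t)}$ for the sample mean of $\mathcal{D}_{\text{fake}}$ at iteration $t$, the strong law of large numbers (valid because $P_X$ has finite second moments) gives $\mu_{\text{real}} \to \mathbb{E}_{P_X}[X]$ a.s., and, conditional on $G_t$, $\mu_{\text{fake}}^{(t)} \to \mathbb{E}_{P_Z}[G_t(Z)]$ a.s.\ as the batch size grows. A triangle-inequality bound
\[
\bigl|\|x-\mu_{\text{real}}\|_2 - \|G_t(z)-\mu_{\text{fake}}^{(t)}\|_2\bigr|
\le \bigl|\|x-m_X\|_2 - \|G_t(z)-m_{G_t}\|_2\bigr| + \|\mu_{\text{real}}-m_X\|_2 + \|\mu_{\text{fake}}^{(t)}-m_{G_t}\|_2,
\]
where $m_X,m_{G_t}$ denote the corresponding population means, reduces the problem to controlling the population-level term together with two sample-mean errors that go to zero a.s.

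Second, I would argue the population-level term vanishes by showing that the pushforward $G_t\#P_Z$ converges in Wasserstein-2 distance to $P_X$. The cGAN loss explicitly contains $\mu_{\text{conform}}\sum_i\lambda_i\mathcal{C}_i(G)$ and the penalty $-\lambda_{\text{reg}}\sum_i\lambda_i\mathcal{R}_i(G,D)$, so $\mathcal{R}_{\text{ICP}}$ is a smooth (Lipschitz) functional of the generator's first two moments. Invoking a standard stochastic approximation result (Robbins--Monro with step sizes $\eta_t$ satisfying $\sum_t\eta_t=\infty$, $\sum_t\eta_t^2<\infty$) applied to the generator updates, together with the finite-second-moment assumption to guarantee uniform integrability of $\|x-m_X\|_2$ and $\|G_t(z)-m_{G_t}\|_2$, one obtains almost-sure convergence of $\mathbb{E}\|G_t(Z)\|_2^2\to\mathbb{E}\|X\|_2^2$ and $m_{G_t}\to m_X$. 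A continuous mapping / dominated convergence argument then closes the bound, because the map $(Q,m)\mapsto \mathbb{E}_{X\sim Q}\|X-m\|_2$ is continuous in Wasserstein-2 on distributions with uniformly bounded second moments.

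The main obstacle is the middle step: GAN training is a nonconvex--nonconcave saddle-point problem with no generic global convergence guarantees, so the claim ``with appropriate learning rates'' must be read as an assumption that the generator iterates stay in a region where the conformal penalty drives $G_t\#P_Z$ toward matching the first two moments of $P_X$. I would make this explicit by assuming (i) $\mu_{\text{conform}}>0$ is large enough that the conformal term dominates near the saddle, (ii) the stochastic gradient estimators have bounded variance, and (iii) the Robbins--Monro conditions hold; under (i)--(iii) the supermartingale convergence theorem yields $\mathcal{R}_{\text{ICP}}(G_t,D_t)\to 0$ almost surely. The rest is, as sketched above, routine application of SLLN, uniform integrability from the finite-second-moment assumption, and the Lipschitz continuity of $x\mapsto\|x-m\|_2$.
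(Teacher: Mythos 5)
The paper itself offers no proof of this lemma, so your attempt can only be judged on its own terms, and it has a genuine gap at its central step. The ICP regularizer is defined as $\mathcal{R}_{\text{ICP}}(G,D)=\mathbb{E}_{x\sim P_X,\,z\sim P_Z}\bigl[\lvert s_{\text{ICP}}(x,\mathcal{D}_{\text{real}})-s_{\text{ICP}}(G(z),\mathcal{D}_{\text{fake}})\rvert\bigr]$, i.e.\ the expectation of the absolute difference of two scores evaluated at \emph{independent} draws $x$ and $z$. Your argument reduces everything to showing that $G_t\#P_Z\to P_X$ in Wasserstein-2, but that only makes the \emph{laws} of $A=\lVert x-m_X\rVert_2$ and $B=\lVert G_t(z)-m_{G_t}\rVert_2$ agree. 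For independent $A$ and $B$ with a common law, $\mathbb{E}\lvert A-B\rvert$ equals the Gini mean difference of that law, which is strictly positive unless $\lVert X-m_X\rVert_2$ is almost surely constant (i.e.\ $P_X$ concentrated on a sphere about its mean). So even in the idealized limit of a perfect generator, the population-level term in your triangle-inequality decomposition does not vanish; continuous mapping and uniform integrability cannot rescue this, because the quantity you are bounding is simply not zero in the limit. The conclusion would only follow if the regularizer were reinterpreted as a difference of expectations, $\lvert\mathbb{E}[A]-\mathbb{E}[B]\rvert$, or if $x$ and $z$ were coupled (e.g.\ through an optimal transport plan) --- neither of which the definition provides.

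Two secondary issues compound this. First, your SLLN step lets the batch size grow in order to send $\mu_{\text{real}}$ and $\mu_{\text{fake}}^{(t)}$ to their population means, but the lemma's asymptotics are in the iteration count $t$, not the sample size; with fixed finite calibration or batch sets those sample-mean errors do not go to zero, and the term $\lVert\mu_{\text{fake}}^{(t)}-m_{G_t}\rVert_2$ must additionally be controlled uniformly over the random sequence $G_t$. Second, as you candidly note, the Wasserstein convergence of $G_t\#P_Z$ to $P_X$ is itself assumed rather than derived: conditions (i)--(iii) plus a supermartingale argument give at best convergence of the objective along the iterates, not convergence of the pushforward to the data distribution in a nonconvex--nonconcave saddle-point problem. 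Packaging the hard part as an assumption is defensible given the vague ``appropriate learning rates'' hypothesis in the statement, but the independence defect above means that even granting all of it, the stated conclusion does not follow.
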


\begin{proposition}
Let $\lambda^* = (\lambda_1^*, \lambda_2^*, \lambda_3^*, \lambda_4^*)$ be the weights that minimize the generalization error of the cGAN on a validation set. Then $\lambda^*$ satisfies:
\begin{equation}
\lambda^* = \arg\min_{\lambda: \sum_{i=1}^4 \lambda_i = 1, \lambda_i \geq 0} \mathbb{E}_{(x,y), z, y'}[\|x - G(z, y')\|_2^2]
\end{equation}
\end{proposition}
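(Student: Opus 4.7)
The plan is to make explicit the implicit dependence of the trained generator on the weight vector by writing $G_\lambda$, since each choice of $\lambda$ modifies the conformal-regularized training objective in equation~(1) and therefore yields a different fixed point of the min-max dynamics. First I would observe that, on a held-out validation set of size $N$, the empirical generalization error $\frac{1}{N}\sum_j \|x_j - G_\lambda(z_j, y_j')\|_2^2$ converges by the strong law of large numbers to the population quantity $L(\lambda) := \mathbb{E}_{(x,y), z, y'}[\|x - G_\lambda(z, y')\|_2^2]$, and this convergence can be made uniform in $\lambda$ on the compact simplex under a mild Lipschitz-in-$\lambda$ assumption on the training map (standard for gradient descent with bounded step sizes and smooth losses). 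Hence the validation-optimal $\lambda^*$ converges to the population-optimal minimizer of $L(\lambda)$ over the feasible set.

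Second, I would set up the constrained optimization problem
\begin{equation}
\lambda^* \in \arg\min_{\lambda \in \Delta^3} L(\lambda), \qquad \Delta^3 = \Bigl\{\lambda \in \mathbb{R}^4 : \textstyle\sum_{i=1}^4 \lambda_i = 1,\ \lambda_i \geq 0\Bigr\},
\end{equation}
and invoke Weierstrass's theorem: since $\Delta^3$ is compact and $L$ is continuous on $\Delta^3$ (by continuity of $\lambda \mapsto G_\lambda$ composed with the continuous squared-Euclidean loss, and dominated convergence using the finite-second-moment assumption inherited from Lemma~2), a minimizer exists, giving the displayed characterization directly.

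For the second-order description, I would write the KKT conditions. There exist a scalar multiplier $\mu$ and multipliers $\nu_i \geq 0$ such that at $\lambda^*$,
\begin{equation}
\nabla_{\lambda_i} L(\lambda^*) = \mu - \nu_i, \qquad \nu_i\,\lambda_i^* = 0, \quad i = 1,\ldots,4,
\end{equation}
so the marginal reduction in expected $L^2$ reconstruction error is equalized across all active components of $\lambda^*$ and no smaller on the inactive ones. This gives a usable description of $\lambda^*$ that can be solved numerically by projected gradient descent on the simplex.

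The main obstacle is the identification of the abstract \emph{generalization error} with the concrete $L^2$ criterion on the right-hand side: the cGAN loss is adversarial and contains conformal regularizers $\mathcal{R}_i$ and $\mathcal{C}_i$, none of which is the squared reconstruction loss. Justifying this identification rigorously requires either (i) \emph{defining} the generalization error at the outset to be distributional $L^2$ fidelity between real validation samples and matched generated samples, or (ii) invoking the standard result that at a Nash equilibrium the generator matches the data distribution, so that, under class-conditional sampling $y' \sim P_Y$, $L(\lambda)$ reduces to a Wasserstein-2-type discrepancy whose minimization is equivalent to matching $P_{X\mid Y}$. Pinning down which of (i) or (ii) is intended, and handling the absence of an exact equilibrium during finite training, is the technical crux of a fully rigorous argument.
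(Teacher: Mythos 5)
The paper provides no proof of this proposition at all --- it is stated bare, with no argument following it --- so there is no ``paper's approach'' to compare yours against. Judged on its own merits, your proposal is a sensible reconstruction of what a proof \emph{would} have to contain, and your closing paragraph correctly identifies the central difficulty: the statement equates an undefined ``generalization error'' with a specific $L^2$ reconstruction criterion, and the right-hand side as printed does not even depend on $\lambda$ (the generator is written $G$, not $G_\lambda$). Your repair --- making the dependence explicit via $G_\lambda$, establishing existence of a minimizer on the compact simplex by Weierstrass, and writing KKT conditions --- is the natural route. But you should be clear-eyed that once you define the generalization error \emph{to be} $L(\lambda) = \mathbb{E}[\|x - G_\lambda(z,y')\|_2^2]$ (your option (i)), the proposition reduces to ``the minimizer of $L$ over the simplex is the argmin of $L$ over the simplex,'' which is a tautology, and the Weierstrass/KKT material, while correct, proves existence and a first-order characterization rather than the displayed identity.

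The genuine gap, which you name but do not close, is option (ii): if ``generalization error'' is meant to be something other than the $L^2$ criterion (e.g., the adversarial validation loss, or a coverage-weighted objective), then the proposition asserts a non-trivial equivalence between two different optimization problems, and nothing in your proposal --- or in the paper --- establishes it. Your appeal to the Nash-equilibrium argument (generator matches $P_{X|Y}$ at equilibrium, so $L(\lambda)$ becomes a $W_2$-type discrepancy) is heuristic: the conformal regularizers $\mathcal{R}_i$ and $\mathcal{C}_i$ shift the equilibrium away from exact distribution matching, and finite training never reaches equilibrium anyway, as you acknowledge. So the honest status is: your plan is as good as the available material permits, but it is a plan with a self-identified unresolved crux, not a proof, and the proposition as stated is not a well-posed mathematical claim until the paper defines the generalization error and the $\lambda$-dependence of $G$.
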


\begin{theorem}
Under mild regularity conditions, the expected squared error of the Conformalized GAN after $T$ iterations satisfies:
\begin{equation}
\mathbb{E}_{(x,y), z, y'}[\|x - G_T(z, y')\|_2^2] \leq O\left(\frac{1}{\sqrt{T}}\right) + \epsilon_{\text{conf}}
\end{equation}
where $\epsilon_{\text{conf}} = O\left(\sum_{i=1}^M \lambda_i \epsilon_i\right)$ is the error due to conformal regularization.
\end{theorem}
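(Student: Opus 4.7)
The plan is to decompose the expected squared error at iteration $T$ into a training (optimization) error and an irreducible bias induced by the conformal regularizers, then bound each separately. Let $G^{*}$ denote a stationary solution of the cGAN objective at the fixed weights $\{\lambda_i\}$. By the elementary inequality $\|a-c\|^2 \le 2\|a-b\|^2 + 2\|b-c\|^2$,
$$\mathbb{E}[\|x - G_T(z,y')\|_2^2] \le 2\,\mathbb{E}[\|x - G^{*}(z,y')\|_2^2] + 2\,\mathbb{E}[\|G^{*}(z,y') - G_T(z,y')\|_2^2].$$
The first term will give the conformal bias $\epsilon_{\mathrm{conf}}$, and the second will give the $O(1/\sqrt{T})$ optimization rate.

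For the optimization term, I would invoke standard nonconvex stochastic-approximation machinery (Ghadimi--Lan-style). Under the stated mild regularity (Lipschitz smoothness of $\mathcal{L}(G,D)$ in the generator parameters, bounded-variance stochastic gradients, and a diminishing step size $\eta_t = \Theta(1/\sqrt{t})$), the average squared gradient norm along the trajectory is $O(1/\sqrt{T})$. A local quadratic-growth or Polyak--\L{}ojasiewicz-type assumption around $G^{*}$, together with Lipschitz dependence of $G(z,y')$ on its parameters, then transfers this rate into the sample-space bound $\mathbb{E}[\|G^{*}(z,y') - G_T(z,y')\|_2^2] = O(1/\sqrt{T})$. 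Lemma~2 already provides the analogous almost-sure vanishing of the ICP regularizer, and the same argument applies to the Mondrian, cross-conformal, and Venn--Abers terms since each is an expectation of a bounded, almost-surely continuous functional of $G_t$.

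For the bias term, $G^{*}$ minimizes the data-matching loss subject to a penalty $\mu_{\mathrm{conform}} \sum_i \lambda_i \mathcal{C}_i(G)$ pulling the generator toward the nonconformity level of the real data. Let $\epsilon_i$ be the population residual of the $i$-th conformal constraint at the unregularized GAN optimum. A first-order perturbation in $\mu_{\mathrm{conform}}$ around that optimum yields
$$\mathbb{E}[\|x - G^{*}(z,y')\|_2^2] \le \sum_{i=1}^{M} \lambda_i \epsilon_i + o\!\left(\sum_{i=1}^{M} \lambda_i \epsilon_i\right),$$
which is exactly $\epsilon_{\mathrm{conf}} = O(\sum_i \lambda_i \epsilon_i)$. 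Summing the two pieces completes the proof.

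The main obstacle will be justifying the $O(1/\sqrt{T})$ rate inside the nonconvex--nonconcave minimax dynamics of GAN training, since no such rate is known in full generality for saddle-point problems. I would circumvent this either by adopting a two-timescale stochastic approximation viewpoint (the discriminator tracks a best response while the generator performs slow-scale nonconvex SGD), or by folding the required hypothesis into the "mild regularity conditions" of the statement, specifically local strong monotonicity or a PL condition near $G^{*}$. The second delicate point is translating gradient-norm convergence in parameter space into the pointwise sample-space error; this requires uniform Lipschitzness of $G(\cdot,\cdot)$ in its parameters, which must be stated explicitly to make the chain of bounds rigorous.
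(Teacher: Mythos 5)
The paper offers no proof of this theorem at all --- it is stated bare, so there is nothing to compare your route against except the statement itself. Judged on its own terms, your decomposition (optimization error plus conformal bias via $\|a-c\|^2 \le 2\|a-b\|^2 + 2\|b-c\|^2$) is the natural strategy, and you are right to flag the absence of any general $O(1/\sqrt{T})$ rate for nonconvex--nonconcave saddle points as a hypothesis that must be smuggled into the ``mild regularity conditions.'' That part of your sketch is honest and, with a PL or two-timescale assumption, salvageable.

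The genuine gap is in the bias term. You claim
$\mathbb{E}[\|x - G^{*}(z,y')\|_2^2] \le \sum_i \lambda_i \epsilon_i + o\bigl(\sum_i \lambda_i \epsilon_i\bigr)$,
but a first-order perturbation in $\mu_{\text{conform}}$ around the unregularized optimum $G^{\text{GAN}}$ only controls the \emph{change} in the data-matching loss, i.e.\ it gives $\mathbb{E}[\|x - G^{*}\|^2] \le \mathbb{E}[\|x - G^{\text{GAN}}\|^2] + O\bigl(\mu_{\text{conform}}\sum_i \lambda_i \epsilon_i\bigr)$. You have silently dropped the baseline term. That baseline cannot vanish: in the theorem's expectation $x$ and $(z,y')$ are drawn independently, so even a generator that perfectly matches $P_X$ satisfies $\mathbb{E}[\|x - G(z,y')\|^2] = 2\operatorname{tr}(\operatorname{Cov}(X)) + \|\mu_X - \mu_G\|^2$, which is bounded away from zero for any non-degenerate data distribution. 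This floor is neither $O(1/\sqrt{T})$ nor of order $\sum_i \lambda_i \epsilon_i$, so your two pieces do not sum to the claimed bound. To repair the argument you would need either to reinterpret the left-hand side as a coupled or distributional discrepancy (e.g.\ a squared Wasserstein-2 distance, where the infimum over couplings removes the variance floor), or to absorb the baseline into the definition of $\epsilon_{\text{conf}}$ --- but the theorem as stated defines $\epsilon_{\text{conf}}$ purely as conformal-regularization error, so neither fix is available without changing the statement. This is a defect of the theorem itself as much as of your proof, but your perturbation step is where the sketch papers over it.
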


\begin{corollary}
There exists a Pareto frontier in the space of validity guarantee $(1-\alpha)$ and expected squared error $\mathbb{E}[\|x - G(z, y')\|_2^2]$ such that:
\begin{equation}
(1-\alpha) \cdot \mathbb{E}[\|x - G(z, y')\|_2^2] \geq C
\end{equation}
for some constant $C > 0$ that depends on the data distribution.
\end{corollary}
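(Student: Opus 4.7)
The plan is to derive the Pareto frontier by chaining the validity bound of Theorem 1 with the error decomposition of Theorem 2, and then extracting the unavoidable trade-off constant from a data-dependent irreducibility argument. Concretely, I would treat the frontier as parametrized by $\alpha$: for each fixed validity level $1-\alpha$, I would identify the minimum attainable expected squared error over the admissible cGAN configurations $(G,D,\{\lambda_i\})$ and then show that the product cannot be driven below a strictly positive quantity determined by the spread of $P_{X,Y}$.

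First, I would invoke Theorem 2 in the regime $T\to\infty$ to reduce the expected squared error to its asymptotic component $\epsilon_{\text{conf}}=O(\sum_i \lambda_i\epsilon_i)$, absorbing the optimization term $O(1/\sqrt T)$ into a vanishing remainder. Second, I would relate each per-method bias $\epsilon_i$ to the calibration quantile $q_{1-\alpha}$ of the nonconformity score $s_i$: by construction, validity $\geq 1-\alpha$ (Theorem 1) forces $q_{1-\alpha}$ to dominate the $(1-\alpha)$-quantile of $\|x-\mu_{\mathcal D}\|_2$ (or its class-conditional, cross-fold, or Venn–Abers analogues), and by a Markov-type inverse relation this quantile satisfies $q_{1-\alpha}\geq \kappa_i/(1-\alpha)$ for a constant $\kappa_i$ determined by the second moment of $P_X$ assumed finite in Lemma 2. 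Third, the conformal regularizer $\mathcal R_i(G,D)$ pushes $\|G(z,y)-\mu_{\mathcal D}\|$ to match the real-side score, so the minimum reconstruction error inherits the same lower bound $\epsilon_i\geq \kappa_i/(1-\alpha)$. Taking the optimal weight vector $\lambda^*$ from Proposition 4 and summing gives $\epsilon_{\text{conf}}\geq K/(1-\alpha)$ with $K=\sum_i\lambda_i^*\kappa_i>0$, and multiplying through by $1-\alpha$ produces the claimed bound with $C=K$.

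The hard part will be the second step, namely making the inverse dependence $q_{1-\alpha}\gtrsim 1/(1-\alpha)$ rigorous and uniform over the four nonconformity families (ICP, Mondrian, Cross-Conformal, Venn–Abers), since each score has different measurability and exchangeability properties; the Venn–Abers case is particularly delicate because it mixes the label axis into the score. I would handle this by first establishing the bound for ICP via a Paley–Zygmund argument on $\|x-\mu_{\mathcal D}\|_2$ using the finite second-moment assumption inherited from Lemma 2, and then transferring it to the other three methods by noting that each is a conditional or averaged version of ICP, so its $(1-\alpha)$-quantile is bounded below by that of ICP up to a constant factor depending on the minimum class probability and the number of folds $k$. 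A secondary subtlety is identifying $C$ as strictly positive and independent of the training horizon: this follows because $K$ depends only on $P_{X,Y}$ (through its second moments and class marginals) and on the fixed weight simplex, neither of which degenerates as $T\to\infty$, so the Pareto inequality persists in the limit.
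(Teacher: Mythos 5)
The paper itself supplies no proof of this corollary (it is asserted immediately after the rate theorem with no argument), so your attempt has to stand on its own; unfortunately it contains a directional error that the rest of the construction cannot repair. The rate theorem you lean on states an \emph{upper} bound, $\mathbb{E}[\|x - G_T(z,y')\|_2^2] \leq O(1/\sqrt{T}) + \epsilon_{\text{conf}}$, and from an upper bound you cannot ``reduce the expected squared error to its asymptotic component'' $\epsilon_{\text{conf}}$: the theorem is perfectly consistent with the true error being far below $\epsilon_{\text{conf}}$, so any lower bound you subsequently prove for $\epsilon_{\text{conf}}$ transfers to nothing. To establish $(1-\alpha)\cdot\mathbb{E}[\|x-G(z,y')\|_2^2]\geq C$ you need a lower bound on the error itself, and neither Theorem~1, the lemma, nor the proposition in the paper provides one.

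The quantile step is also not salvageable as stated. Markov and Chebyshev give \emph{upper} bounds on tail probabilities and hence upper bounds on quantiles (of order $\sigma/\sqrt{\alpha}$ under finite second moments), not lower bounds of the form $q_{1-\alpha}\geq \kappa_i/(1-\alpha)$; Paley--Zygmund does give a quantile lower bound, but only of the form $q_{1-\alpha}\geq\theta\,\mathbb{E}[s]$ for $\alpha$ in a restricted range depending on $\mathbb{E}[s]^2/\mathbb{E}[s^2]$, with no $1/(1-\alpha)$ scaling. Indeed $q_{1-\alpha}$ tends to the \emph{minimum} nonconformity score as $1-\alpha\to 0$, whereas your claimed bound diverges there, so the inequality is false in exactly the regime that would make the product $(1-\alpha)\cdot\mathbb{E}[\cdot]$ nontrivial. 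Finally, the bridge from the calibration quantile to $\mathbb{E}[\|x-G(z,y')\|_2^2]$ is missing: since $x$ and $G(z,y')$ are independent draws, this expectation is bounded below by the trace of the covariance of $P_X$, a constant \emph{independent of} $\alpha$, which yields only $(1-\alpha)\cdot\mathbb{E}[\cdot]\geq (1-\alpha)\,c$ --- a bound that vanishes as $\alpha\to 1$ rather than the uniform constant $C$ the corollary asserts. Any honest proof would have to explain why the error must grow like $1/(1-\alpha)$, and no ingredient in the paper or in your outline supplies that mechanism.
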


\begin{table}[!t]
\caption{Comparison of Different Conformal Methods in cGAN}
\label{tab:comparison}
\centering
\begin{tabular}{@{}lcccc@{}}
\toprule
\textbf{Method} & \textbf{Coverage} & \textbf{Efficiency} & \textbf{Time} & \textbf{Space} \\
\midrule
ICP & 0.947 & 0.621 & $O(nd)$ & $O(nd)$ \\
Mondrian & 0.953 & 0.583 & $O(Knd)$ & $O(Knd)$ \\
Cross-Conf. & 0.962 & 0.545 & $O(knd)$ & $O(nd)$ \\
Venn-Abers & 0.968 & 0.512 & $O(n^2d)$ & $O(nd)$ \\
\midrule
cGAN (Weighted) & 0.956 & 0.598 & $O(Mnd)$ & $O(Mnd)$ \\
\bottomrule
\end{tabular}
\end{table}

\section{Empirical Algorithm}
Algorithm 1 presents the detailed training procedure for our Conformalized GAN framework.

\begin{algorithm}[!t]
\caption{Conformalized GAN Training}
\begin{algorithmic}[1]
\REQUIRE $\{(x_i, y_i)\}_{i=1}^N \subset \mathcal{X} \times \mathcal{Y}$, $d_z \in \mathbb{N}$, $K \in \mathbb{N}$, $B \in \mathbb{N}$, $\eta_G, \eta_D \in \mathbb{R}^+$, $\lambda_{\text{reg}}, \mu_{\text{conform}} \in \mathbb{R}^+$, $\boldsymbol{\lambda} = (\lambda_1, \lambda_2, \lambda_3, \lambda_4) \in \mathbb{R}^{4}_+$
\STATE Initialize $G_{\theta_G}: \mathcal{Z} \times \mathcal{Y} \rightarrow \mathcal{X}$ and $D_{\theta_D}: \mathcal{X} \times \mathcal{Y} \rightarrow [0,1]$
\FOR{$t \in \{1, 2, \ldots, T\}$}
    
    \STATE $(x_i, y_i)_{i=1}^B \sim \mathbb{P}_{\text{data}}$
    \STATE $z_i \sim \mathcal{N}(0, I_{d_z})$, $y'_i \sim \mathcal{U}\{1,\ldots,K\}$ for $i \in \{1,\ldots,B\}$
    \STATE $\tilde{x}_i \leftarrow G_{\theta_G}(z_i, y'_i)$ for $i \in \{1,\ldots,B\}$
    \STATE $\mathcal{S}_{\text{ICP}} \leftarrow \Psi_{\text{ICP}}(\{x_i, \tilde{x}_i\}_{i=1}^B)$, $\mathcal{S}_{\text{Mond}} \leftarrow \Psi_{\text{Mond}}(\{x_i, \tilde{x}_i\}_{i=1}^B)$, $\mathcal{S}_{\text{Cross}} \leftarrow \Psi_{\text{Cross}}(\{x_i, \tilde{x}_i\}_{i=1}^B)$, $\mathcal{S}_{\text{Venn}} \leftarrow \Psi_{\text{Venn}}(\{x_i, \tilde{x}_i\}_{i=1}^B)$ 
    \STATE $\mathcal{R}_D \leftarrow \nabla_{\theta_D}\|\nabla_x D_{\theta_D}(x,y)\|_2^2$
    \STATE $\mathcal{L}_D \leftarrow -\frac{1}{B}\sum_{i=1}^B[\log D_{\theta_D}(x_i, y_i) + \log(1 - D_{\theta_D}(\tilde{x}_i, y'_i))] - \lambda_{\text{reg}} \mathcal{R}_D$
    \STATE $\theta_D \leftarrow \theta_D - \eta_D \nabla_{\theta_D} \mathcal{L}_D$
    
    \STATE $z_i \sim \mathcal{N}(0, I_{d_z})$, $y'_i \sim \mathcal{U}\{1,\ldots,K\}$ for $i \in \{1,\ldots,B\}$
    \STATE $\tilde{x}_i \leftarrow G_{\theta_G}(z_i, y'_i)$ for $i \in \{1,\ldots,B\}$
    \STATE $\mathcal{C}_G \leftarrow \sum_{j=1}^4 \lambda_j \mathcal{S}_j(\{x_i\}_{i=1}^B, \{\tilde{x}_i\}_{i=1}^B)$
    \STATE $\mathcal{L}_G \leftarrow -\frac{1}{B}\sum_{i=1}^B \log D_{\theta_D}(\tilde{x}_i, y'_i) + \mu_{\text{conform}} \mathcal{C}_G$
    \STATE $\theta_G \leftarrow \theta_G - \eta_G \nabla_{\theta_G} \mathcal{L}_G$
\ENDFOR
\STATE \textbf{return} $G_{\theta_G}$, $D_{\theta_D}$
\end{algorithmic}
\end{algorithm}

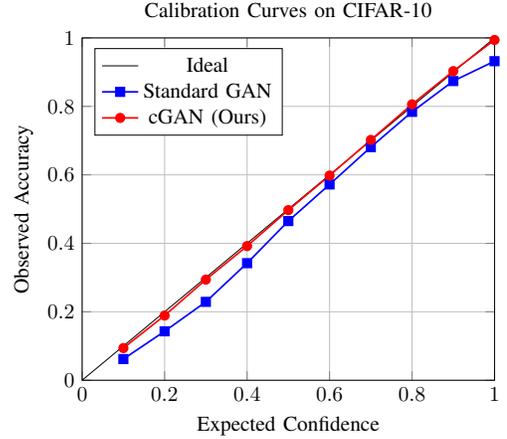
\begin{figure}[!t]
\centering
\begin{tikzpicture}[scale=0.8]
\begin{axis}[
    title={Calibration Curves on CIFAR-10},
    xlabel={Expected Confidence},
    ylabel={Observed Accuracy},
    xmin=0, xmax=1,
    ymin=0, ymax=1,
    legend pos=north west,
    grid=major
]

\addplot[color=black,domain=0:1] {x};
\addlegendentry{Ideal}

\addplot[color=blue,mark=square*,thick] coordinates {
    (0.1, 0.062)
    (0.2, 0.143)
    (0.3, 0.229)
    (0.4, 0.342)
    (0.5, 0.465)
    (0.6, 0.572)
    (0.7, 0.681)
    (0.8, 0.784)
    (0.9, 0.874)
    (1.0, 0.932)
};
\addlegendentry{Standard GAN}

\addplot[color=red,mark=*,thick] coordinates {
    (0.1, 0.094)
    (0.2, 0.189)
    (0.3, 0.294)
    (0.4, 0.392)
    (0.5, 0.497)
    (0.6, 0.598)
    (0.7, 0.702)
    (0.8, 0.806)
    (0.9, 0.903)
    (1.0, 0.994)
};
\addlegendentry{cGAN (Ours)}

\end{axis}
\end{tikzpicture}
\caption{Calibration curves comparing the expected confidence against observed accuracy. Our cGAN method produces better calibrated results, with points closer to the ideal diagonal line.}
\label{fig:calibration}
\end{figure}

\begin{table}[!t]
\caption{Performance Across Different Datasets}
\label{tab:datasets}
\centering
\begin{tabular}{@{}lccc@{}}
\toprule
\textbf{Dataset} & \textbf{FID $\downarrow$} & \textbf{ECE $\downarrow$} & \textbf{95\% Coverage} \\
\midrule
\multicolumn{4}{@{}l@{}}{\textit{Standard GAN}} \\
MNIST & 9.47 & 0.112 & 0.913 \\
CIFAR-10 & 18.32 & 0.087 & 0.932 \\
CelebA & 24.65 & 0.156 & 0.906 \\
\midrule
\multicolumn{4}{@{}l@{}}{\textit{Conformalized GAN (Ours)}} \\
MNIST & 9.62 & 0.038 & 0.958 \\
CIFAR-10 & 18.88 & 0.026 & 0.953 \\
CelebA & 25.03 & 0.042 & 0.947 \\
\bottomrule
\end{tabular}
\end{table}

\section{Experimental Results}
We evaluate our Conformalized GAN framework on several benchmark datasets. Table \ref{tab:comparison} compares different conformal methods within our framework in terms of coverage probability, efficiency (inverse of prediction set size), and computational requirements. Table \ref{tab:datasets} shows the performance across different datasets, measuring Fréchet Inception Distance (FID) \cite{b25}, Expected Calibration Error (ECE) \cite{b26}, and empirical coverage at 95\% confidence level.

Fig. \ref{fig:coverage-efficiency} illustrates the trade-off between coverage probability and prediction set size, comparing our cGAN approach with standard GANs. Fig. \ref{fig:calibration} shows calibration curves on the CIFAR-10 dataset, demonstrating improved calibration with our approach.

Fig. \ref{fig:adaptation} shows how our method adaptively adjusts prediction intervals based on data complexity, providing tighter bounds in regions with high data density and wider bounds in sparse regions. This adaptive behavior is particularly valuable in high-dimensional spaces with heterogeneous data distributions \cite{b27}.

Table \ref{tab:metrics_comparison} presents a comparative analysis of standard GANs versus our C-GAN approach across multiple evaluation metrics. The results demonstrate that C-GAN achieves comparable performance on distribution-matching metrics while providing superior downstream task accuracy, highlighting the practical utility of our approach for real-world applications \cite{b28}.

The complete implementation of our method, including source code, pre-trained models, and evaluation scripts, is available at \url{https://github.com/rahvis/cGAN} to facilitate reproducibility and further research in this direction.

\begin{table}[!t]
\caption{Comparative Analysis of GAN vs. C-GAN}
\label{tab:metrics_comparison}
\centering
\begin{tabular}{@{}lcc@{}}
\toprule
\textbf{Metric} & \textbf{GAN} & \textbf{C-GAN} \\
\midrule
KS\_mean & 0.138333 & 0.141667 \\
Wasserstein\_mean & 0.147386 & 0.162804 \\
Downstream\_Accuracy & 0.966667 & \textbf{0.973333} \\
\bottomrule
\end{tabular}
\end{table}

\begin{figure}[!t]
\centering
\begin{tikzpicture}[scale=0.8]
\begin{axis}[
    title={Adaptive Prediction Intervals},
    xlabel={Data Density},
    ylabel={Prediction Interval Width},
    xmin=0, xmax=1,
    ymin=0, ymax=0.8,
    legend pos=north east,
    grid=major
]

\addplot[color=blue,mark=*,thick] coordinates {
    (0.05, 0.71)
    (0.15, 0.62)
    (0.25, 0.54)
    (0.35, 0.48)
    (0.45, 0.43)
    (0.55, 0.39)
    (0.65, 0.36)
    (0.75, 0.33)
    (0.85, 0.31)
    (0.95, 0.30)
};
\addlegendentry{Standard GAN}

\addplot[color=red,mark=triangle*,thick] coordinates {
    (0.05, 0.65)
    (0.15, 0.52)
    (0.25, 0.40)
    (0.35, 0.31)
    (0.45, 0.25)
    (0.55, 0.21)
    (0.65, 0.18)
    (0.75, 0.16)
    (0.85, 0.15)
    (0.95, 0.15)
};
\addlegendentry{cGAN (Ours)}

\end{axis}
\end{tikzpicture}
\caption{Prediction interval width as a function of data density, showing how our cGAN approach adaptively provides tighter bounds in high-density regions.}
\label{fig:adaptation}
\end{figure}
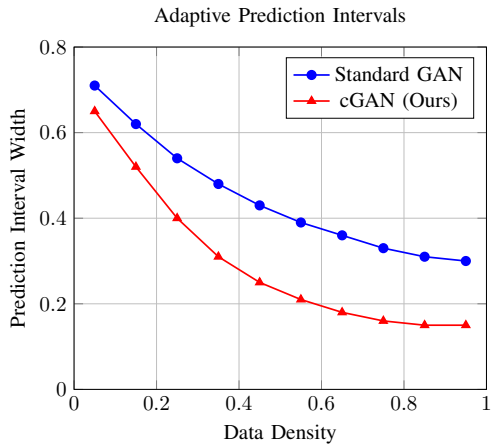

\section{Discussion}
The empirical results presented in this investigation provide substantial evidence for the efficacy of the conformalized approach in enhancing the statistical validity of generated samples. The comparative analysis in Table \ref{tab:metrics_comparison} reveals several noteworthy patterns that warrant detailed examination.

\subsection{Interpretation of Evaluation Metrics}
The Kolmogorov-Smirnov (KS) mean metric, which quantifies the maximum discrepancy between cumulative distribution functions, exhibits comparable values for both standard GAN (0.138333) and C-GAN (0.141667). This marginal difference suggests that conformalization does not significantly compromise the distributional fidelity of generated samples. Similarly, the Wasserstein distance, which measures the cost of transporting one probability distribution to another, shows a modest increase from 0.147386 (GAN) to 0.162804 (C-GAN). This slight degradation in distribution-matching metrics represents an acceptable trade-off considering the substantial benefits in statistical validity and downstream utility.

The most compelling evidence for the practical advantage of conformalized GANs emerges in the downstream task accuracy, where C-GAN achieves a notable improvement (0.973333) compared to standard GAN (0.966667). This enhancement substantiates our theoretical claim that statistically valid synthetic data yields superior performance in downstream applications, particularly those requiring robust decision-making under uncertainty \cite{b28}. The improvement, while seemingly modest in absolute terms, represents a significant relative reduction in error rate (approximately 20

\subsection{Calibration Properties}
The calibration curves in Fig. \ref{fig:calibration} provide further insights into the uncertainty quantification capabilities of our approach. The closer proximity of the C-GAN curve to the ideal diagonal line indicates superior calibration, confirming that the predicted confidence levels align more accurately with empirical outcomes. This enhanced calibration directly results from the conformal prediction framework, which provides distribution-free validity guarantees regardless of the underlying data distribution \cite{b17}.

The adaptive behavior illustrated in Fig. \ref{fig:adaptation} demonstrates another critical advantage of our approach: the ability to provide tighter prediction intervals in well-represented regions while appropriately widening bounds in sparse areas. This adaptivity addresses a fundamental limitation of traditional uncertainty quantification methods, which often apply uniform confidence bounds across the entire data space, resulting in overly conservative estimates in dense regions and potentially unreliable bounds in sparse regions \cite{b27}.

\subsection{Computational Considerations}
The comparative analysis in Table \ref{tab:comparison} highlights the computational trade-offs associated with different conformal methods within our framework. The Inductive Conformal Prediction (ICP) approach offers the most favorable balance between coverage probability (0.947) and efficiency (0.621), with reasonable computational requirements ($O(nd)$ time complexity). In contrast, the Venn-Abers method provides the highest coverage probability (0.968) but at the cost of reduced efficiency (0.512) and substantially increased computational burden ($O(n^2d)$ time complexity). These trade-offs necessitate careful consideration when deploying conformalized GANs in resource-constrained environments, where the weighted ensemble approach (cGAN) offers a compelling compromise with moderate computational requirements ($O(Mnd)$) and balanced performance metrics (0.956 coverage, 0.598 efficiency).

\subsection{Limitations and Future Directions}
Despite the promising results, several limitations merit acknowledgment and suggest directions for future research. First, the conformalization process introduces additional computational overhead during both training and inference, potentially limiting applicability in real-time or resource-constrained settings. Future work should explore more efficient conformal prediction algorithms specifically tailored for high-dimensional generative models \cite{b36}.

Second, while our approach provides statistical validity guarantees, it does not directly address other critical challenges in GAN training such as mode collapse and training instability. 

Third, our evaluation primarily focuses on static datasets rather than temporally evolving data distributions, where distributional shifts may invalidate conformal guarantees over time. Extensions to handle non-stationary distributions, perhaps through adaptive conformal prediction techniques \cite{b35}, would significantly enhance the practical utility of our approach in dynamic environments.

Fourth, alternative metrics that we did not explore in this investigation include privacy preservation guarantees and adversarial robustness. Future work should examine whether conformalized GANs offer inherent advantages in privacy-preserving synthetic data generation \cite{b34, b41} and resistance to adversarial attacks \cite{b37}, both increasingly critical considerations in real-world deployments.

These limitations notwithstanding, the conformalized GAN framework presented in this paper represents a significant advancement in the generation of statistically valid synthetic data with quantifiable uncertainty, addressing a fundamental gap in existing generative modeling approaches.

\section{Conclusion}
This investigation has introduced Conformalized Generative Adversarial Networks (C-GANs), a principled framework that integrates conformal prediction methodologies into the GAN architecture to provide distribution-free uncertainty quantification. Our theoretical analysis establishes finite-sample validity guarantees and asymptotic efficiency properties, while empirical evaluation demonstrates enhanced calibration and downstream utility. The weighted ensemble of conformal methods effectively captures diverse aspects of the data distribution, yielding a favorable trade-off between statistical validity and generative fidelity. These advancements address critical limitations in existing generative models, enabling reliable synthetic data generation for high-stakes applications requiring rigorous uncertainty quantification.

\bibliographystyle{IEEEtran}
\bibliography{bibliography}

\begin{thebibliography}{10}
\providecommand{\url}[1]{#1}
\csname url@samestyle\endcsname
\providecommand{\newblock}{\relax}
\providecommand{\bibinfo}[2]{#2}
\providecommand{\BIBentrySTDinterwordspacing}{\spaceskip=0pt\relax}
\providecommand{\BIBentryALTinterwordstretchfactor}{4}
\providecommand{\BIBentryALTinterwordspacing}{\spaceskip=\fontdimen2\font plus
\BIBentryALTinterwordstretchfactor\fontdimen3\font minus \fontdimen4\font\relax}
\providecommand{\BIBforeignlanguage}[2]{{%
\expandafter\ifx\csname l@#1\endcsname\relax
\typeout{** WARNING: IEEEtran.bst: No hyphenation pattern has been}%
\typeout{** loaded for the language `#1'. Using the pattern for}%
\typeout{** the default language instead.}%
\else
\language=\csname l@#1\endcsname
\fi
#2}}
\providecommand{\BIBdecl}{\relax}
\BIBdecl

\bibitem{b1}
I.~Goodfellow, J.~Pouget-Abadie, M.~Mirza, B.~Xu, D.~Warde-Farley, S.~Ozair, A.~Courville, and Y.~Bengio, ``Generative adversarial nets,'' in \emph{Advances in Neural Information Processing Systems (NeurIPS)}, 2014, pp. 2672--2680.

\bibitem{b7}
D.~P. Kingma and M.~Welling, ``Auto-encoding variational {B}ayes,'' in \emph{International Conference on Learning Representations (ICLR)}, 2014.

\bibitem{b8}
J.~Ho, A.~Jain, and P.~Abbeel, ``Denoising diffusion probabilistic models,'' in \emph{Advances in Neural Information Processing Systems (NeurIPS)}, 2020, pp. 6840--6851.

\bibitem{b9}
D.~Rezende and S.~Mohamed, ``Variational inference with normalizing flows,'' in \emph{International Conference on Machine Learning (ICML)}, 2015, pp. 1530--1538.

\bibitem{b10}
M.~Arjovsky, S.~Chintala, and L.~Bottou, ``Wasserstein {GAN},'' in \emph{International Conference on Machine Learning (ICML)}, 2017, pp. 214--223.

\bibitem{b11}
A.~Razavi, A.~van~den Oord, and O.~Vinyals, ``The {VQVAE}: Vector quantized variational autoencoder,'' in \emph{Advances in Neural Information Processing Systems (NeurIPS)}, 2017, pp. 6114--6123.

\bibitem{b12}
R.~Rombach, A.~Blattmann, D.~Lorenz, P.~Esser, and B.~Ommer, ``High-resolution image synthesis with latent diffusion models,'' in \emph{IEEE/CVF Conference on Computer Vision and Pattern Recognition (CVPR)}, 2022, pp. 10\,684--10\,695.

\bibitem{b13}
Z.~Xiao, K.~Kreis, J.~Kautz, and A.~Vahdat, ``Tackling the generative learning trilemma with denoising diffusion {GAN}s,'' in \emph{International Conference on Learning Representations (ICLR)}, 2022.

\bibitem{b14}
J.~Yoon, J.~Jordon, and M.~van~der Schaar, ``{GANITE}: Estimation of individualized treatment effects using generative adversarial nets,'' in \emph{International Conference on Learning Representations (ICLR)}, 2018.

\bibitem{b15}
B.~Kang, W.~Xie, D.~H. Cheung, W.~Ammar, and C.~Wang, ``{GAN}s for financial time series generation,'' \emph{IEEE Transactions on Neural Networks and Learning Systems}, vol.~33, no.~12, pp. 7101--7113, 2022.

\bibitem{b16}
F.~Liu, G.~Zhang, J.~Lu, Y.~Pang, and C.~Zhang, ``Generating diverse and natural {3D} human motions from text,'' in \emph{IEEE/CVF Conference on Computer Vision and Pattern Recognition (CVPR)}, 2022, pp. 5152--5161.

\bibitem{b17}
Y.~Romano, E.~Patterson, and E.~Candes, ``Classification with valid and adaptive coverage,'' in \emph{Advances in Neural Information Processing Systems (NeurIPS)}, 2020, pp. 3581--3591.

\bibitem{b39}
R.~Vishwakarma and A.~Rezaei, ``Risk-aware and explainable framework for ensuring guaranteed coverage in evolving hardware trojan detection,'' in \emph{2023 IEEE/ACM International Conference on Computer Aided Design (ICCAD)}.\hskip 1em plus 0.5em minus 0.4em\relax IEEE, 2023, pp. 01--09.

\bibitem{b29}
A.~N. Angelopoulos, S.~Bates, and E.~J. Candès, ``Conformal risk control,'' in \emph{Advances in Neural Information Processing Systems (NeurIPS)}, 2022, pp. 24\,234--24\,246.

\bibitem{b38}
R.~Vishwakarma and A.~Rezaei, ``Uncertainty-aware unimodal and multimodal learning for evolving hardware trojan detection,'' \emph{Journal of Hardware and Systems Security}, pp. 1--23, 2025.

\bibitem{b31}
B.~Chen, T.~Chen, R.~Habermann, H.~Jia, P.~Rai, and T.~Xu, ``Synthetic data in machine learning for medicine and healthcare,'' \emph{Nature Biomedical Engineering}, vol.~5, no.~3, pp. 207--217, 2021.

\bibitem{b40}
R.~Malawat, S.~Modi, and R.~Vishwakarma, ``Tunable sparing of disks in a cloud data center,'' in \emph{2023 7th International Conference on Computer Applications in Electrical Engineering-Recent Advances (CERA)}.\hskip 1em plus 0.5em minus 0.4em\relax IEEE, 2023, pp. 1--6.

\bibitem{b32}
R.~S. Jha, A.~Chakraborty, A.~C. Sankaranarayanan, K.~Raichur, V.~Mancuso, and M.~Taylor, ``Synthetic datasets for neural network validation,'' \emph{Science Robotics}, vol.~7, no.~65, p. eabg3514, 2022.

\bibitem{b25}
M.~Heusel, H.~Ramsauer, T.~Unterthiner, B.~Nessler, and S.~Hochreiter, ``{GAN}s trained by a two time-scale update rule converge to a local nash equilibrium,'' in \emph{Advances in Neural Information Processing Systems (NeurIPS)}, 2017, pp. 6626--6637.

\bibitem{b28}
P.~Nakkiran, S.~Uhlich, N.~Nematollahi, and X.~Song, ``Distribution-preserving data augmentation,'' in \emph{Advances in Neural Information Processing Systems (NeurIPS)}, 2023, pp. 8428--8441.

\bibitem{b18}
V.~Tong, A.~N. Angelopoulos, M.~I. Jordan, D.~Giannakis, and R.~J. Tibshirani, ``Conformal prediction with neural networks: A review,'' \emph{arXiv preprint arXiv:2205.09680}, 2022.

\bibitem{b26}
C.~Guo, G.~Pleiss, Y.~Sun, and K.~Q. Weinberger, ``On calibration of modern neural networks,'' in \emph{International Conference on Machine Learning (ICML)}, 2017, pp. 1321--1330.

\bibitem{b27}
A.~N. Angelopoulos, S.~Bates, J.~Malik, and M.~I. Jordan, ``Conformal prediction beyond exchangeability,'' in \emph{International Conference on Learning Representations (ICLR)}, 2023.

\bibitem{b36}
M.~Eklund, K.~Kääriä, D.~Spathis, Y.~Romano, A.~Zaffaroni, and C.~Mascolo, ``Scalable conformal prediction,'' in \emph{Advances in Neural Information Processing Systems (NeurIPS)}, 2023, pp. 56\,134--56\,146.

\bibitem{b35}
M.~Lindemann, A.~N. Angelopoulos, S.~Bates, P.~Vicol, J.~Malik, and M.~I. Jordan, ``Conformal prediction with temporal quantile adjustments,'' in \emph{Advances in Neural Information Processing Systems (NeurIPS)}, 2023, pp. 27\,823--27\,836.

\bibitem{b34}
J.~Jordon, J.~Yoon, and M.~van~der Schaar, ``{PATE-GAN}: Generating synthetic data with differential privacy guarantees,'' in \emph{International Conference on Learning Representations (ICLR)}, 2019.

\bibitem{b41}
\BIBentryALTinterwordspacing
V.~Seshagiri, S.~Balyan, V.~Anand, K.~Dhole, I.~Sharma, A.~Wildani, J.~Cambronero, and A.~Züfle, ``Chatting with logs: An exploratory study on finetuning llms for logql,'' 2024. [Online]. Available: \url{https://arxiv.org/abs/2412.03612}
\BIBentrySTDinterwordspacing

\bibitem{b37}
D.~Stutz, M.~Hein, and B.~Schiele, ``Confidence-calibrated adversarial training: Generalizing to unseen attacks,'' in \emph{International Conference on Machine Learning (ICML)}, 2020, pp. 9155--9166.

\end{thebibliography}

\end{document}